\def\eqref#1{equation~\ref{#1}}
\def\1{\bm{1}}
\DeclareMathAlphabet{\mathsfit}{\encodingdefault}{\sfdefault}{m}{sl}
\SetMathAlphabet{\mathsfit}{bold}{\encodingdefault}{\sfdefault}{bx}{n}
\def\etc{\emph{etc.}}
\def\ie{\emph{i.e., }}
\definecolor{dkgreen}{rgb}{0,0.6,0}
\definecolor{gray}{rgb}{0.5,0.5,0.5}
\definecolor{mauve}{rgb}{0.58,0,0.82}
\tiny\color{gray},
\title{Toeplitz Neural Network for \\Sequence Modeling}
\author{
{\normalsize
$^{2}$Zhen Qin\ \ \ $^{2}$Xiaodong Han \ \ \ $^{3}$Weixuan Sun\ \ \   $^{2}$Bowen He\ \ \ $^{1}$Dong Li \ \ \ $^{3}$Dongxu Li\ \ \ 
\vspace{0.3mm}
}\\
{\normalsize
\ \textbf{$^{4}$Yuchao Dai ~ $^{5}$Lingpeng Kong ~ $^{1}$Yiran Zhong}\thanks{Indicates the corresponding author. Email: \texttt{zhongyiran@gmail.com}} \vspace{0.3mm}
}\\
\ $^{1}$Shanghai AI Laboratory \qquad $^{2}$SenseTime Research  \qquad $^{3}$Australian National University\\ \ $^{4}$Northwestern Polytechnical University\qquad $^{5}$The University of Hong Kong\\
}
\def\dpb{{\textsc{RPE}}}
\def\tnn{{\textsc{TNN}}}
\begin{document}

\maketitle

\begin{abstract}
 Sequence modeling has important applications in natural language processing and computer vision. Recently, the transformer-based models have shown strong performance on various sequence modeling tasks, which rely on attention to capture pairwise token relations, and position embedding to inject positional information. While showing good performance, the transformer models are inefficient to scale to long input sequences, mainly due to the quadratic space-time complexity of attention. To overcome this inefficiency, we propose to model sequences with a relative position encoded Toeplitz matrix and use a Toeplitz matrix-vector production trick to reduce the space-time complexity of the sequence modeling to log linear. A lightweight sub-network called relative position encoder is proposed to generate relative position coefficients with a fixed budget of parameters, enabling the proposed Toeplitz neural network to deal with varying sequence lengths. In addition, despite being trained on 512-token sequences, our model can extrapolate input sequence length up to 14K tokens in inference with consistent performance. Extensive experiments on autoregressive and bidirectional language modeling, image modeling, and the challenging Long-Range Arena benchmark show that our method achieves better performance than its competitors in most downstream tasks while being significantly faster. The code is available at \href{https://github.com/OpenNLPLab/Tnn}{https://github.com/OpenNLPLab/Tnn}.
\end{abstract}

\section{Introduction}
\vspace{-3mm}

\begin{figure}[h]
  \centering
  \setlength{\abovecaptionskip}{0.cm}
    \includegraphics[width=0.49\textwidth]{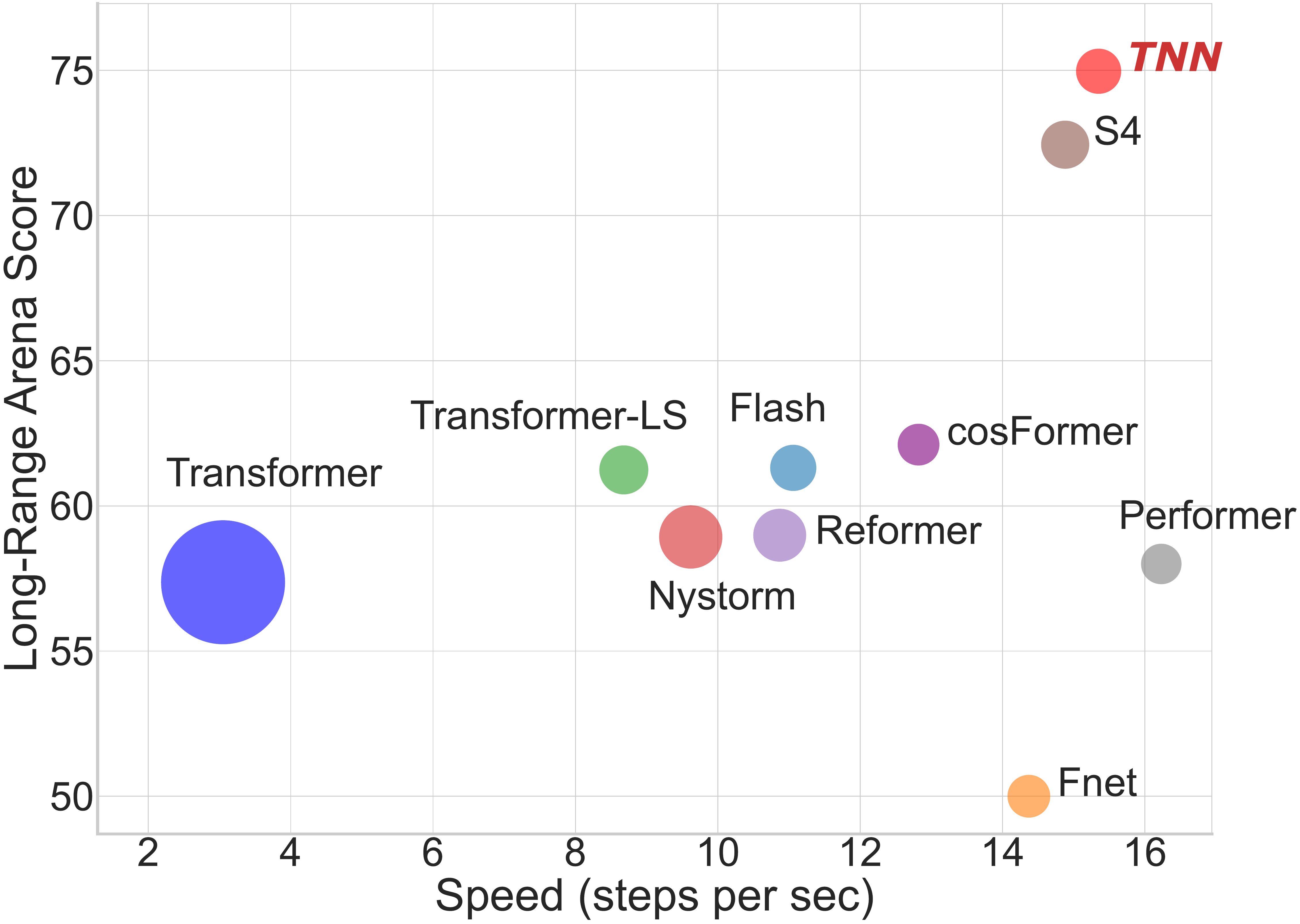}
    \includegraphics[width=0.49\textwidth]{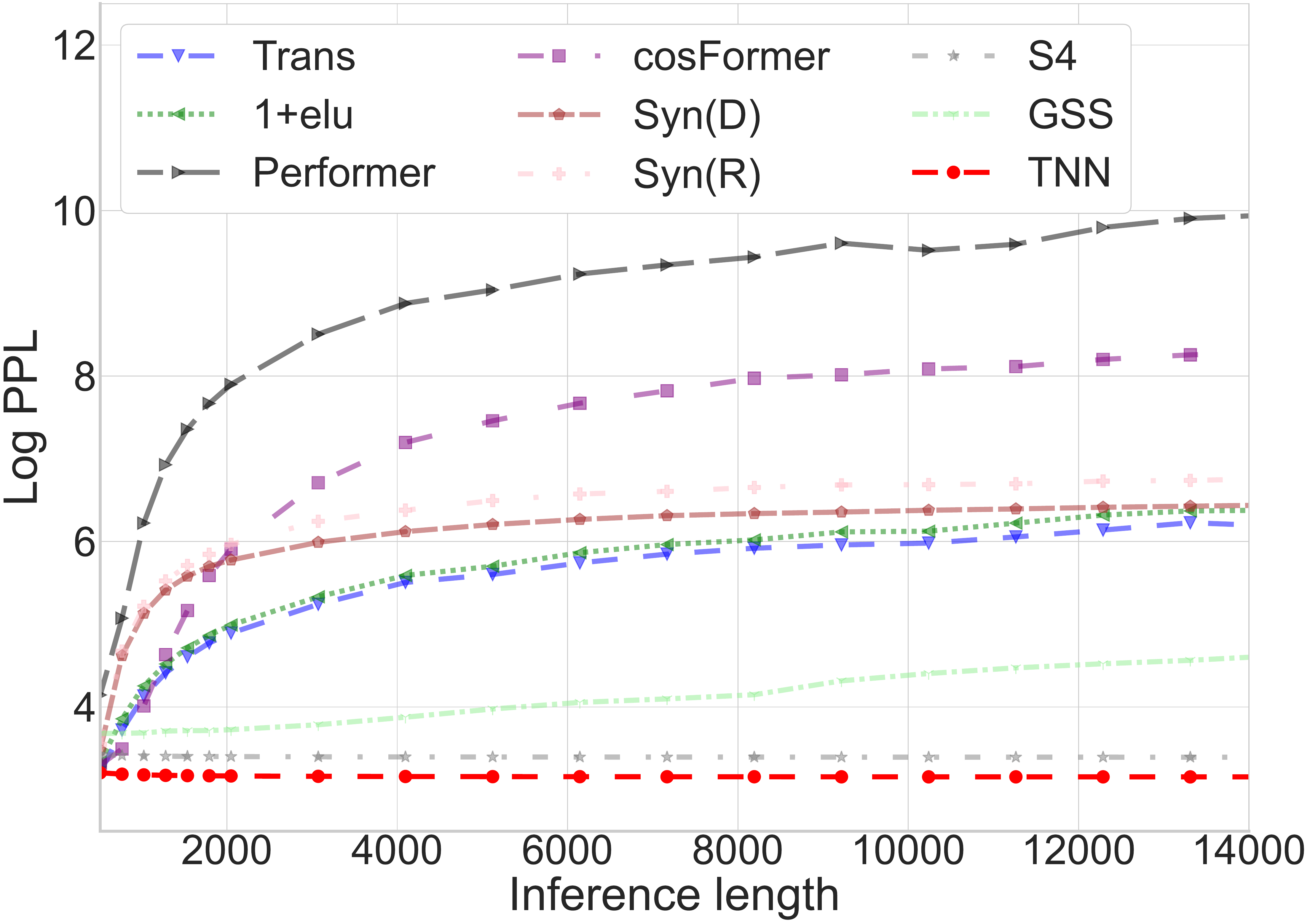}
    \vspace{2mm}
    \caption{The left figure shows the training speed ($x$-axis), performances ($y$-axis), and GPU memory footprints (circle sizes) of the TNN and competing methods on Long-Range Arena benchmark. The TNN beats the competitors with a clear margin. The right figure plots the extrapolation results with different sequence lengths, where the $x$-axis denotes sequence lengths, and the $y$-axis denotes $log$ PPL. It demonstrates that regardless of the sequence length, the PPL of the TNN remains constant.}
    \label{speed acc mem}
\end{figure}
\noindent

Sequence modeling is a fundamental problem in natural language processing, speech processing, and computer vision. Various sequence modeling methods have been proposed in the literature, including recurrent~\citep{hochreiter1997long}, convolutional architectures~\citep{lecun1989backpropagation}, and transformers~\citep{vaswani2017attention}. 
These models utilize various properties of sequential data for their modeling.
For example, recurrent models~\citep{hochreiter1997long} mimic the sequential property by sequentially processing the input while maintaining hidden states through steps. 
Convolutional models~\citep{lecun1989backpropagation} enforce the locality bias sequentially and only interact elements within local patches. Transformers use attention matrices to model pairwise relations regardless of the distance between them. Recently, Transformers~\citep{vaswani2017attention,dosovitskiy2021an} show strong performance on a wide range of applications across domains and become arguably one of the most successful architectures for sequence modeling in general.

There are two main components in transformers: the attention mechanism that learns pairwise correlations of tokens from data, and the position embedding to introduce positional inductive biases.
The vanilla attention mechanism requires quadratic space-time complexity, which precludes Transformers from handling long sequences. Numerous attention variants have been proposed recently to reduce the complexity, including linear transformers~\citep{katharopoulos2020transformers}, and Performer~\citep{choromanski2021rethinking}. Although the types of attention vary, the position embedding remains in every method, which indicates the importance of position information in sequence modeling.
This motivates us to ask the following question: since position information is important, can we design a model that relies entirely on the position information of its elements regardless of their content, thus alleviating the quadratic computation cost of the vanilla attention mechanism?

In this paper, we give an affirmative answer to this question by introducing Toeplitz neural network, a new efficient architecture that solely exploits relative position relations for sequence modeling.
In specific, instead of attention matrices, the Toeplitz neural network uses Toeplitz matrices to capture relations between each token pair.
There are two motivations for selecting the Toeplitz matrix. One is that it compactly represents relative positional relations between tokens with much fewer parameters, \ie $2n-1$ parameters for an $n\times n$ Toeplitz matrix. The other is that the Toeplitz matrix-vector production can be efficiently processed in $O(n\log n)$ complexity, which is exactly what we used in our token mixing operation.
In this way, we avoid computing content similarities between tokens and effectively reduce the quadratic computation complexity of transformers to log linear, rendering a more efficient sequence modeling architecture.

We further propose \emph{relative position encoder}, a lightweight module that generates relative position parameters to assemble the Toeplitz matrices, so that the number of the TNN's parameters will no longer depend on the sequence length. Moreover, it allows TNN to deal with varying sequence lengths without retraining. 
In addition, the input sequence length extrapolation becomes an important ability in sequence modeling as training on longer sequences can be prohibitively expensive~\citep{alibi}. We propose an exponential decay bias that directly applies to the Toeplitz matrix. Our model achieves a consistent performance to a sequence length of 14K tokens in inference when training on sequences of 512 tokens. 
We also show analytically that the Toeplitz neural network represents a general form of sequence modeling methods, and derives transformers, CNNs, and the recently proposed State-space-based methods~\citep{gu2022efficiently} as its special forms.

We validate our model on a wide range of sequence modeling tasks and benchmarks.
These include auto-regressive language modeling, text classification, image classification, and the Long-Range Arena benchmark. As illustrated in Fig.~\ref{speed acc mem}, our model achieves state-of-the-art performance on most tasks at a favorable log linear space-time complexity.
It also demonstrates superior extrapolation capabilities when training on shorter sequences and evaluating on longer ones off-the-shelf.

\section{Preliminary}
\vspace{-3mm}
\label{gen_inst}







In this section, we introduce concepts used throughout the paper, including positional embedding, token and channel mixing, and the Toeplitz matrix. Notations used can be found in Appendix~\ref{math notation}.

\textbf{Positional embedding} is introduced in transformers~\citep{vaswani2017attention} to inject positional inductive bias.
It often uses fixed or learned parameters to encode position-specific information, thus making the model position-aware.
There are mainly two types of positional embeddings: the absolute positional embedding~\citep{vaswani2017attention} and the relative position embedding~\citep{shaw-etal-2018-self}. 
In this work, we focus on the relative position embedding to emphasize pair-wise token relations.
A typical relative positional embedding~\citep{raffel2020exploring} is formulated as:
\begin{equation}
\label{eq:rel}
e_{ij}=\mathbf q_i ^\top \mathbf k_j/\sqrt{d} + w_{i-j},
\end{equation}
where $j,i$ are two positional indices, $e_{ij}$ denotes the attention score before softmax. The $\mathbf{q}_i, \mathbf{k}_j$ represents the queries and keys in the attention. The $w_{i-j}$ is a positional coefficient. In this case, the relative position information is added to the attention as a bias.

\textbf{Token and channel mixing} are used by~\citep{yu2022metaformer} to refer to the two main procedures in sequence modeling. The token mixing refers to the process of mixing information between token pairs and the channel mixing for those between feature channels. In the Transformers, given the attention matrix $\mathbf A \in \mathbb R^{n\times n}$ and token matrix $\mathbf X\in \mathbb R^{n\times d}$, the attention operation $\mathbf{AX}$ can be regarded as a token mixing process and the FFN module is used for channel mixing. 

Researchers often classify various sequence modeling techniques based on the token mixing techniques used. MLP-based methods~\citep{liu2021pay,tolstikhin2021mlp} use matrix multiplication on the sequence dimension for token mixing. FFT-based methods~\citep{lee-thorp-etal-2022-fnet} utilize the FFT on the sequence dimension to mix token-wise information. The State-space-based methods~\citep{gu2022efficiently} leverage the state equations and hidden states to model sequences, as well as perform interactions between tokens.

\textbf{Toeplitz matrix} is a special form of a matrix that has constant values along each diagonal running from left to right,
\ie
\begin{equation}
\mathbf T_{ij} = \mathbf T_{i+1,j+1} =t_{i-j}, \mathbf T \in \mathbb R^{n\times n}.
\end{equation}
There are two nice properties of a Toeplitz matrix: 1). For an $n\times n$ Toeplitz matrix, we can efficiently describe it with $2n-1$ parameters. 2). The Toeplitz matrix-vector production is faster than standard matrix-vector production. In particular, we have:
\newtheorem{toep}{Theorem}[section]
\begin{toep}
\label{toep}
For a Toeplitz matrix $\mathbf T\in\mathbb R^{n\times n}$ and any vector $\mathbf x\in \mathbb R^{n}$, the time complexity of $\mathbf T \mathbf x$ is $O(n\log n)$.
\end{toep}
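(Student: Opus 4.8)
The plan is to recognize the matrix–vector product as a discrete (linear) convolution and to compute that convolution with the Fast Fourier Transform via a circulant embedding. Writing out $(\mathbf{T}\mathbf{x})_i = \sum_{j=1}^n t_{i-j}\,x_j$, we see that the output is exactly the convolution of the generating sequence $(t_{-(n-1)},\dots,t_{n-1})$ with the entries of $\mathbf{x}$, sampled at the appropriate indices. A direct evaluation of this sum costs $O(n^2)$; the goal is to shave this to $O(n\log n)$, and the log factor will come entirely from the FFT.

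First I would embed $\mathbf{T}$ into a circulant matrix. Pick $m$ to be a power of two with $m \ge 2n-1$ (so that $m = O(n)$), and build an $m\times m$ circulant matrix $\mathbf{C}$ whose first column lists the values $t_0, t_1, \dots, t_{n-1}, 0,\dots,0, t_{-(n-1)}, \dots, t_{-1}$ in circulant order. With this choice the top-left $n\times n$ block of $\mathbf{C}$ coincides with $\mathbf{T}$. Padding $\mathbf{x}$ with zeros to length $m$ to form $\hat{\mathbf{x}}$ and applying $\mathbf{C}$, the first $n$ coordinates of $\mathbf{C}\hat{\mathbf{x}}$ recover $\mathbf{T}\mathbf{x}$ exactly, because the zero padding kills all the wrap-around contributions that distinguish circular from linear convolution.

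The key structural fact is that every circulant matrix is diagonalized by the Fourier matrix $\mathbf{F}$: one has $\mathbf{C} = \mathbf{F}^{-1}\,\mathrm{diag}(\mathbf{F}\mathbf{c})\,\mathbf{F}$, where $\mathbf{c}$ is the first column of $\mathbf{C}$. Hence $\mathbf{C}\hat{\mathbf{x}} = \mathbf{F}^{-1}\big((\mathbf{F}\mathbf{c})\odot(\mathbf{F}\hat{\mathbf{x}})\big)$, where $\odot$ denotes the entrywise product. I would justify this diagonalization by noting that the columns of $\mathbf{F}$ are eigenvectors of the cyclic shift operator and that $\mathbf{C}$ is a polynomial in that shift, so the two share an eigenbasis.

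Finally I would account for the cost. Computing $\mathbf{F}\mathbf{c}$, $\mathbf{F}\hat{\mathbf{x}}$, and the final inverse transform amounts to three FFTs, each on a vector of length $m = O(n)$, costing $O(m\log m) = O(n\log n)$; the entrywise product and the extraction of the first $n$ entries cost only $O(n)$. Summing these gives the claimed $O(n\log n)$ bound. The only substantive steps are the circulant diagonalization and the correctness of the embedding; the $O(m \log m)$ running time of the FFT itself may be invoked as a standard black box, so the main obstacle is purely verifying that the zero-padded circular convolution agrees with $\mathbf{T}\mathbf{x}$ on the first $n$ coordinates.
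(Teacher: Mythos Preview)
Your proposal is correct and follows essentially the same route as the paper: embed $\mathbf{T}$ in a circulant matrix, zero-pad $\mathbf{x}$, invoke the DFT diagonalization of circulants, and read off the first $n$ entries after an FFT-based multiply. The only cosmetic differences are that the paper uses a fixed $2n\times 2n$ embedding (with $c_n = t_0$ rather than your zero fill) whereas you pad out to the next power of two $m \ge 2n-1$; both choices keep the top-left $n\times n$ block equal to $\mathbf{T}$ and kill the wrap-around terms, so the argument and the $O(n\log n)$ accounting are identical.
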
 
We provide detailed proof in Appendix \ref{proof:toep}. This property enables us to use the Toeplitz matrices to perform efficient token mixing.

\section{Toeplitz neural network}
\vspace{-3mm}
\label{headings}
In this section, we provide a detailed design and analysis of our proposed Toeplitz Neural Network (TNN) by giving a glance at the overall structure of our model first and then describing each of its components. We also discuss the connection between the TNN and other sequence modeling methods at the end of this section.

\subsection{The overall architecture}\label{sec:overall-arch}
Our model consists of a stack of Gated Toeplitz Units (GTU) and GLU~\citep{shazeer2020glu}. GTU is a modified GLU layer injected with the proposed Toeplitz Neural Operator (TNO), as illustrated in Fig.~\ref{overallarch}. 
A TNO is used to perform token mixing with a Toeplitz matrix. To generate relative position coefficients for the Toeplitz matrix, we propose a Relative Position Encoder (RPE), a lightweight fully-connected sub-network to encode the relative position information. An exponential decay bias is also added to the Toeplitz matrix to enable extrapolation on longer inputs.  
\begin{figure}[t]
    \begin{center}
    \vspace{-5mm}
    {\includegraphics[width=0.9\textwidth]{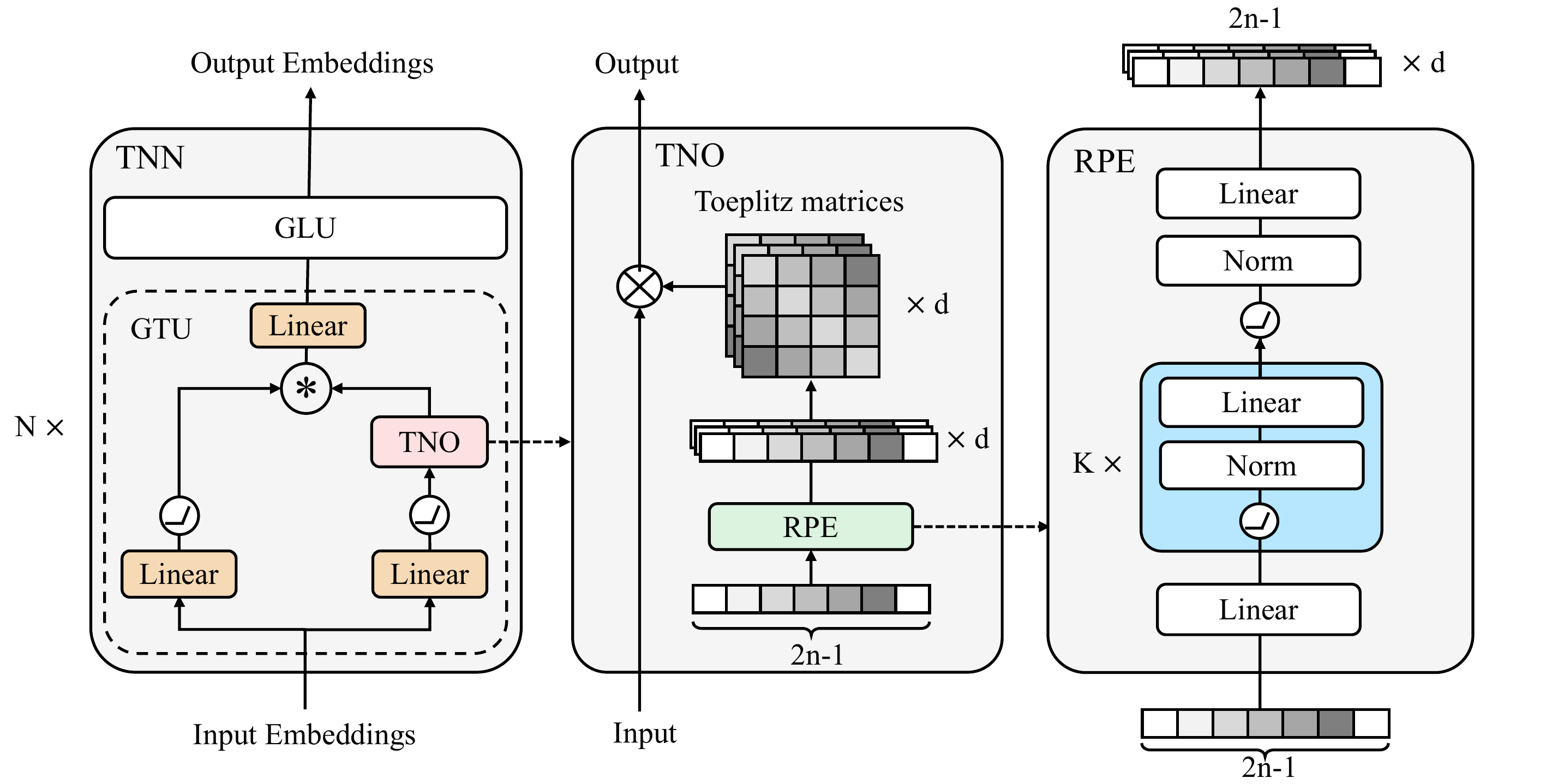}}
    \vspace{-3mm}
    \caption{Network structure overview of the proposed Toeplitz Neural Network. The proposed sequence modeling block is composed of a Gated Toeplitz Unit and a GLU~\cite{shazeer2020glu} and. We propose the TNO to perform token mixing with only relative position information. We use a small fully-connected network named RPE to encode relative position information.}
    \vspace{-3mm}
    \label{overallarch}
    \end{center}
    
\end{figure}

\subsection{Toeplitz neural operator}\label{sec:tno}
Here, we will show how to use a Toeplitz matrix to represent relative positional information. Let us consider $i,j$ to be two positions in a 1D sequence, by using the relative position embedding in Eq.~\ref{eq:rel}, we can define a Toeplitz matrix $\mathbf T\in \mathbb R^{n\times n}$, where $\mathbf T_{ij} = t_{i-j}$. Specifically, given a sequence $\mathbf{x}$ of $n$ tokens, $\mathbf x=[x_0, x_1,\ldots, x_{n-1}]^\top \in \mathbb R^{n}$, we use a scalar $t_{i-j}$ to represent the relative position coefficients between $x_i$ and $x_j$. Then a Toeplitz matrix $\mathbf T \in \mathbb R^{n\times n}$ can be formed by gathering $t_{i-j}$ for every token pair:
\begin{equation}
\mathbf T=\left[\begin{array}{cccc}
t_0 & t_{-1} & \cdots  & t_{-n+1} \\
t_1 & t_0  &  &  \vdots \\
\vdots &   &   t_0 & t_{-1} \\
t_{n-1} & \ldots  & t_1 & t_0
\end{array}\right] \in \mathbb R^{n\times n}.
\end{equation}
Let us define a token mixing operation as:
\begin{equation}
\label{tno}
\mathbf y = \mathbf T \mathbf x \in \mathbb R^{n},
\end{equation}
where $\mathbf y$ is the token mixing result. For any $d$-dimensional sequences, the token mixing is performed on each dimension individually. 

As aforementioned in Theorem~\ref{toep}, the computation complexity of Eq.~\ref{tno} is $O(n\log n)$. As we need to perform token mixing on $d$ dimensions, our TNO has a computation complexity of $O(nd\log n)$. 
One following question is how to calculate the relative position coefficients in $\mathbf T$. A naive solution is to make the coefficients learnable parameters, such that the model can directly learn them from training data. However, this solution has some drawbacks: 1). Parameter explosion. For a $d$-dimensional sequence of $n$ tokens, there are a total of $(2n-1)d$ learnable parameters, which can be prohibitively large as $n$ increases. It also shows an unsatisfactory performance in our ablation studies in Sec.~\ref{dpb input}. 2). Fixed input sequence length. Since the sequence length $n$ is fixed in training, we are unable to adjust the sequence length during inference, \ie it will cause a crucial performance drop when the sequence length changes.
To address these drawbacks, we propose a relative position encoder to generate the relative position coefficients.
\subsection{Relative position encoder}
We illustrate the network structure of our RPE in Fig.~\ref{overallarch}, which is a fully connected network with $K$ layers. The input of the network is a 1-dimensional scalar, \ie the value of $-(n-1),\ldots, (n-1),\forall n\in \mathbb N^+$, and output a $d$ dimension vector, which is used to assemble the Toeplitz matrix. In this case, the number of the TNN's parameters will no longer depend on the input sequence length and the TNN will have the flexibility to deal with various sequence lengths in the inference stage.

Note that recent literature~\citep{mildenhall2021nerf} claims that projecting the scalar input to a higher dimensional space with high frequency functions, \ie $\sin$ and $\cos$ functions, before passing a network can lead to better performance. However, in our ablations, we find that using the original integer achieves better performance.

\textbf{Exponential decay bias}
Previous models~\citep{vaswani2017attention,zhen2022cosformer} often use a fixed sequence length in both training and inference. If we need to infer a longer sequence, the model needs to be retrained on the longer sequence length to maintain the performance, which can be prohibitively expensive in the application. 

ALiBi~\citep{alibi} shows that by applying a simple penalty to the query-key attention scores, the Transformer can handle longer sequence length in inference without compromising the performance. The penalty is a linear bias that is proportional to the distance between tokens. Inspired by this technique, we propose an exponential decay bias that directly applies to the Toeplitz matrix to achieve the same goal. In specific, let us define a decay rate of $\lambda\in [0, 1]$, and the new relative position coefficients $\bar t_{i-j}$ in $\mathbf{T}$ can be expressed as:
\begin{equation}
    \bar t_{i-j}=\lambda^{|i-j|} t_{i-j}.
\end{equation}

ALiBi can be seen as a special case of our method. Given the equation of ALiBi:
\begin{equation}
\bar s_{ij}= \mathbf q_i^{\top} \mathbf k_j /\sqrt{d} + m |i-j|,\quad
\exp(\bar s_{ij}) = \exp(\mathbf q_i^{\top} \mathbf k_j/\sqrt{d}) \exp(m|i-j|),
\end{equation}
and 
\begin{equation}
    s_{ij}= \mathbf q_i^{\top} \mathbf k_j/\sqrt{d}, \quad 
{\lambda} \triangleq \exp(m),
\end{equation}
we have:
\begin{equation}
\exp(\bar s_{ij}) = \exp(s_{ij}) \lambda^{|i-j|}.
\end{equation}
It means the ALiBi applies an exponential decay on the softmax attention matrices whereas ours applies it on the Toeplitz matrices.

\subsection{Relation to other sequence modeling models}\label{sec:relation}
In this section, we will show the relationship between our model and other sequence modeling models such as the Transformers~\citep{vaswani2017attention}, CNNs~\citep{lecun1989backpropagation}, and the State space~\citep{gu2022efficiently}. We also compare the theoretical space-time complexity of our model with previous sequence modeling models in Table.~\ref{theory}.

\textbf{Transformers}
A Transformer with relative position embedding can be expressed as:
\begin{equation}
\mathbf{O} = \mathrm{Softmax}(\mathbf{Q} \mathbf{K}^\top / \sqrt{d} + \mathbf T) \mathbf{V}.
\end{equation}
Comparing it with Eq.~\ref{tno}, the TNN can be regarded as an attention-free transformer, \ie removing the $\mathbf{Q,K}$, and the $\mathrm{Softmax}$, while only keeping the relative position matrices $\mathbf{T}$.

\textbf{CNNs}
A convolutional layer can be viewed as a Toeplitz matrix of a special structure. Considering a 1D convolution:
\begin{equation}
\label{eq:cnn}
\mathbf y=\mathbf h * \mathbf x, \mathbf y_i =\sum_{j=0}^i \mathbf h_{i-j} \mathbf x_j,
\mathbf h \in \mathbb R^{m}, \mathbf x\in \mathbb R^n, \mathbf y \in \mathbb R^{n+m-1}.
\end{equation}

Let's define a Toeplitz matrix $\mathbf T \in \mathbb R^{(n+m-1)\times (n + m -1)}$:
\begin{equation}
\mathbf T_{st}= \begin{cases}
\mathbf h_{t-s} & 0\le t-s \le m-1, 0\le t \le n - 1 \\
 0 & \mathrm{others}, 
\end{cases},
\mathbf z = \left[\begin{array}{c}
\mathbf x\\
\mathbf 0_{m-1}
\end{array}\right] \in \mathbb R^{n + m -1}.
\end{equation}
Then:
\begin{equation}
\mathbf y = \mathbf T \mathbf z \in \mathbb R^{n+m-1}.
\end{equation}
Therefore, a 1D CNN can be viewed as a special case of the TNN with a zero-padded input. For better illustration, we provide a matrix form of CNN operation in Appendix \ref{matrix:cnn}. 

\textbf{State space}
The equation of the State space can be expressed as:
\begin{equation}
\begin{gathered}
\mathbf u_{i}=\mathbf {A} \mathbf u_{i-1}+\mathbf {B}\mathbf x_{i}, \mathbf y_{i}=\mathbf{C} \mathbf u_{i}, \mathbf A \in \mathbb{R}^{h \times h}, \mathbf B \in \mathbb{R}^{h \times 1}, \mathbf C \in \mathbb{R}^{1 \times h}, i=1,\ldots, n
\end{gathered}
\end{equation}
where $\mathbf x_i$ is the input, $\mathbf y_i$ is the output,$\mathbf u_i$ is the intermediate state.
According to~\citep{gu2022efficiently}, the output of the State space is:
\begin{equation}
\begin{gathered}
\mathbf y_{i}=\sum_{j=0}^{i} {\mathbf k}_{i-j}  \mathbf x_{j},
{\mathbf k}=\left(\mathbf {C B}, \mathbf {C A B}, \ldots, \mathbf {C A}^{n-1} \mathbf {B}\right)^\top \in \mathbb R^{n}.\\
\end{gathered}
\end{equation}
Let's define the Toeplitz matrix $\mathbf T \in \mathbb R^{n\times n}$:
\begin{equation}
\label{eq:ss}
\mathbf T_{i-j}=\begin{cases}
\mathbf k_{i-j}, i\ge j \\
0 , i < j
\end{cases}.
\end{equation}
Then:
\begin{equation}
\mathbf y=\mathbf T \mathbf x, \mathbf x \in \mathbb R^{n}, \mathbf y \in \mathbb R^{n}.
\end{equation}
In this case, the State space can be regarded as a special form of {\tnn} with the coefficients that are calculated by the State space. We also provide the matrix form in Appendix \ref{matrix:ss} for better illustration.

\begin{table}[!ht]
\small
    \caption{\textbf{Comparison of theoretical space-time complexity of several models.} Parallel indicates whether parallel training is possible, $n$ indicates the sequence length, and $d$ indicates the feature dimension, $e$ indicates the CNN kernel size. Here we only list about 1D CNN.}
    \label{theory}
    \centering
    \begin{tabular}{c|ccccccc|c}
    \hline
        Method & CNN& RNN &  \makecell[c]{Vanilla \\Attention}
         & \makecell[c]{Linear\\ Attention} & MLP & FFT & \makecell[c]{State \\space}& \tnn \\ \hline
        \makecell[c]{Time\\complexity} & $ned$ & $nd^2$ & $n^2d$ & $nd^2$ & $n^2d$ & $nd\log n$ & $n d\log n$ & $n d\log n$\\ \hline
        \makecell[c]{Space\\complexity} & $n d$ & $nd$ & $n^2d$ & $nd$ & $n^2d$ & $n d$ & $nd$ & $n d$\\ \hline
     Parallel  & True & False & True & True & True & True & True & True\\ \hline
    \end{tabular}
\end{table}

\section{Experiment}
\vspace{-3mm}
\label{others}

We compare our method to four kinds of sequential modeling methods including attention-based methods, MLP-based methods, FFT-based methods, and State-space-based methods. In particular, we select the following methods:
\begin{compactitem}
    \item Attention-based: Vanilla transformer\citep{vaswani2017attention}, Transformer-LS\citep{zhu2021longshort}, FLASH,  \citep{hua2022transformer}, 1+elu \citep{katharopoulos2020transformers}, Performer \citep{choromanski2020rethinking}, cosFormer \citep{zhen2022cosformer}.
    \item MLP-based: gMLP\citep{liu2021pay}, Synthesizer (Random), Synthesizer (Dense) \citep{tay2021synthesizer}.
     \item FFT-based: FNet\citep{lee-thorp-etal-2022-fnet}, GFNet \citep{rao2021global}, AFNO\citep{guibas2021efficient}.
    \item State-space-based: S4\citep{gu2022efficiently}, DSS \citep{2203.14343},  GSS\citep{mehta2022long}.
\end{compactitem}

We evaluate our methods on the WikiText-103~\citep{merity2017pointer} for autoregressive language modeling and the input length extrapolation ability, and the GLUE benchmark~\citep{wang2018glue} for bidirectional language modeling. We also validate the accuracy and efficiency of our methods in handling long-range dependencies on the Long-Range Arena benchmark~\citep{tay2020long}. To demonstrate the robustness of our model, we implement our model in DeiT~\citep{pmlr-v139-touvron21a} structure and compare its performance with the vanilla DeiT~\citep{pmlr-v139-touvron21a} on the ImageNet-1K~\citep{deng2009imagenet} for image classification.

\subsection{Setting}
We implement our models in  Pytorch~\citep{paszke2019pytorch} and train them on 8 V100 GPUs. 
We adopt the same training configuration for all competitors, including batch size, learning rate, training epochs/updates, \etc~ More detailed hyper-parameters are listed in Appendix \ref{config}. 

For the autoregressive language modeling, all models are trained on the WikiText-103 dataset \citep{merity2017pointer} for 50K steps with a learning rate of $0.005$. We use perplexity (PPL) as the evaluation metric. 

For the bidirectional language modeling, we choose the Roberta~\citep{liu2019roberta} model as the base model structure for all methods. 
All models are pre-trained on the WikiText-103~\citep{merity2017pointer} for 50K steps with lr=0.005 and fine-tuned on the GLUE dataset~\citep{wang2018glue}. We use different learning rates among {1e-5, 3e-5, 6e-5, 1e-4} and choose the best result after fine-tuning for 3 epochs. 

For the Long-Range Arena benchmark, 
we adopt the same experimental configurations from the Skyformer~\cite{Skyformer}.
We ensure that performances and efficiencies of all methods are obtained with a similar parameter size and the same training hyperparameters. 

For the image classification on the ImageNet-1k dataset, we adopt the Deit~\citep{pmlr-v139-touvron21a} network structure and replace the transformer layers with our model.

\subsection{Results}
\label{results}
\noindent
\textbf{Autoregressive language modeling}
\label{autoregressive}
Autoregressive language modeling is a crucial task that requires the models to estimate causal probability distribution given the previously seen tokens.
In Table \ref{lm}, we compare the proposed \tnn \,with competing sequence modeling models. 
First, compared to existing Mlp-based methods, {\tnn} shows better performances with a clear margin on both val set and test set.
Transformer-based methods are currently dominant sequence modeling methods. As a strong baseline, Transformer adopts a standard self-attention module with quadratic complexity, {\tnn} still outperforms it on both val and test sets. in addition, {\tnn} achieves better results than most efficient transformers including FLASH, 1+elu, Performer, and cosFormer.
Finally, compared with recent emerging State-space-based sequence modeling methods, {\tnn} achieves superior performance to all competing methods. it proves the effectiveness of our method in causal models.

\begin{wraptable}{r}{0.5\textwidth}
    \centering
    \small
\vspace{-4mm}
    \caption{\textbf{Performances comparison of autoregressive language modeling on the Wikitext-103 dataset.} The best result is highlighted in \textbf{bold} and the second in \underline{underline}. $\downarrow$ means \textit{lower is better}. Attn stands for Attention, Ss stands for State space, Trans stands for Transformer, LS stands for Transformer-LS.}
    
     \setlength{\tabcolsep}{0.41cm}
     \label{lm}
    \begin{tabular}{llll}
    
    \hline
        Method & \makecell[c]{PPL \\(val)} & \makecell[c]{PPL \\(test)} &  \makecell[c]{Params\\(m)} \\ \hline
        \textit{Attn-based}  \\ \hline
        Trans & 24.40 & 24.78 & 44.65 \\ 
        LS & \textbf{23.56} & \textbf{24.05} & 47.89 \\ 
        FLASH & 25.92 & 26.70 & 42.17 \\ 
        1+elu & 27.44 & 28.05 & 44.65 \\ 
        Performer & 62.50 & 63.16 & 44.65 \\
        cosFormer & 26.53 & 27.06 & 44.65 \\ \hline
        \textit{MLP-based}  \\ \hline
        Syn(D) & 31.31 & 32.43 & 46.75 \\ 
        Syn(R) & 33.68 & 34.78 & 44.65 \\ 
        gMLP & 28.08 & 29.13 & 47.83 \\ \hline
        \textit{Ss-based} & ~ & ~ & ~ \\  \hline
        S4 & 38.34 & 39.66 & 45.69 \\ 
        DSS & 39.39 & 41.07 & 45.73 \\ 
        GSS & 29.61 & 30.74 & 43.84 \\ \hline
        \textit{Ours} & ~ & ~ & ~ \\ \hline
         TNN & \underline{23.98} & \underline{24.67} & 48.68 \\ \hline
    \end{tabular}
\vspace{-10mm}
\end{wraptable}

Further, we also compared the extrapolation capabilities of each method. In Figure~\ref{speed acc mem}, we show that our method outperforms all other methods and is comparable to ALiBi~\citep{alibi}. Complete results can be found in Appendix~\ref{extrapola}.

\noindent
\textbf{Bidirectional language modeling}
\label{BIDIRECTIONAL}
We benchmark bidirectional modeling methods on the GLUE datasets in Table.~\ref{blm}. {\tnn} achieves competitive results across all tasks.
Further, it is worth noting that {\tnn} boosts the results of CoLA by a significant margin, showing the ability to reason logistic information from sequences.
It demonstrates the effectiveness of {\tnn} in bidirectional language modeling.

\begin{table}[h]
    \footnotesize
    \setlength{\tabcolsep}{0.28cm}
    \centering
    \caption{\textbf{Performances comparison of bidirectional sequence modeling on the GLUE benchmark.} MNLI is reported by the match/mismatch splits. MRPC is reported by F1 score. CoLA is reported by Matthews correlation coefficient. All the other tasks are measured by accuracy. The best result is highlighted in \textbf{bold} and the second in \underline{underline}. The larger the better for all metrics. "-" means unconverted. Attn stands for Attention, Ss stands for State space, Trans stands for Transformer, LS stands for Transformer-LS.}
    \begin{tabular}{lllllllll}
    \hline
        Method & MNLI &  QNLI &  QQP &  SST-2 &  MRPC &  CoLA & AVG & Params(m) \\ \hline
        \textit{Attn-based} & ~ & ~ & ~ & ~ & ~ & ~ & ~ & ~ \\ \hline
        Trans & 79.37/79.07 & 87.79  & 88.04  & 90.25  & 88.35  & 38.63  & \textbf{78.79}  & 124.70 \\ 
       LS & 77.01/76.78 & 84.86  & 86.85  & 90.25  & 82.65  & 40.65  & 77.01  & 128.28 \\ 
        FLASH & 79.45/80.08 & 87.10  & 88.83  & 90.71  & 82.50  & 29.40  & 76.87  & 127.12 \\ 
        1+elu & 74.87/75.37 & 82.59  & 86.90  & 87.27  & 83.03  & -  & 70.00  & 124.70 \\ 
        Performer & 58.85/59.52 & 63.44  & 79.10  & 81.42  & 82.11  & 19.41  & 63.41  & 124.70 \\ 
        cosFormer & 75.10/75.95 & 82.61  & 86.12  & 89.45  & 81.93  & 33.03  & 74.88  & 124.70 \\ \hline
        \textit{MLP-based} & ~ & ~ & ~ & ~ & ~ & ~ & ~ & ~ \\ \hline
        Syn(D) & 50.93/51.02 & 62.80  & 81.33  & 82.34  & 81.79  & -  & 58.60  & 131.00 \\ 
        Syn(R) & 52.82/52.13 & 62.29  & 78.11  & 82.22  & 81.38  & 4.63  & 59.08  & 129.42 \\ 
        gMLP & 73.30/73.60 & 80.56  & 86.48  & 90.25  & 82.30  & 36.06  & 74.65  & 131.08 \\ \hline
        \textit{FFT-based} & ~ & ~ & ~ & ~ & ~ & ~ & ~ & ~ \\ \hline
        FNet & 62.45/64.71 & 73.31  & 79.43  & 81.88  & 82.91  & -  & 63.53  & 124.70 \\ 
        GFNet  & 66.75/67.45 & 65.42  & 80.25  & 84.40  & 82.44  & 9.62  & 65.19  & 130.06 \\ 
        AFNO & 68.79/69.28 & 73.20  & 85.12  & 88.88  & 82.35  & 36.19  & 71.97  & 121.57 \\ \hline
        \textit{Ss-based} & ~ & ~ & ~ & ~ & ~ & ~ & ~ & ~ \\ \hline
        S4 & 68.45/68.42 & 72.14  & 84.61  & 87.04  & 83.36  & 23.01  & 69.58  & 131.79 \\ 
        DSS & 35.46/35.22 & 50.80  & 65.18  & 65.37  & 80.95  & 6.14  & 48.45  & 123.76 \\ 
        GSS & 50.53/51.58 & 62.58  & 80.98  & 85.67  & 82.11  & 6.56  & 60.00  & 122.80 \\ \hline
        \textit{Ours} & ~ & ~ & ~ & ~ & ~ & ~ & ~ & ~ \\ \hline
        TNN & 76.72/76.06 & 85.06  & 88.30  & 90.60  & 82.96  & 49.85  & \underline{78.51}  & 126.40 \\ 
        \hline
    \end{tabular}
    \vspace{-3mm}
    \label{blm}
\end{table}

\noindent
\textbf{Long-Range Arena benchmark}
\label{lra}
As shown in Table \ref{table: lra res}, we compare {\tnn} with competing methods across five tasks of the LRA benchmark. The results before the Transformer-LS are taken from Skyformer~\citep{Skyformer}. As demonstrated, {\tnn} achieves the best scores on three tasks and the second places on the left two tasks.
In terms of overall results, {\tnn} outperforms all other competing methods including S4~\citep{gu2022efficiently} \footnote{We re-run the S4 experiments with the new configuration to match the number of parameters. For the sake of completeness, we also compare TNN with S4 in the original size of S4 using the suffix "-Large" in Table\ref{table: lra large res}, which validates our ability to encode long sequences.}

For speed comparison, we compare the training speed of the {\tnn} with other methods in Table~\ref{lra:speed}. For a fair and comprehensive comparison, we follow exactly the same configurations of the Skyformer~\cite{Skyformer} and report step per second under different sequence lengths. Timing is conducted on an Nvidia A6000 GPU with 48G GPU memory.

\begin{table*}[!ht]
    \centering
    \setlength{\tabcolsep}{0.4cm}
    \small
    \caption{\textbf{Performances Comparison on the Long Range Arena benchmark.} We use \textbf{bold} and \underline{underline} to highlight the best and the second result of each task respectively. The proposed {\tnn} achieves the best performances and outperforms all competing methods.}
    \begin{tabular}{l|llllll}
    \hline
        Model & Text & ListOps & Retrieval & Pathfinder & Image & AVG. \\ \hline
        Transformer & 61.95 & 38.37 & 80.69 & 65.26 & 40.57 & 57.37 \\ \hline
        Kernelized Attention & 60.22 & 38.78 & 81.77 & 70.73 & 41.29 & 58.56 \\ 
        Nystromformer & 64.83 & 38.51 & 80.52 & 69.48 & 41.30 & 58.93 \\ 
        Linformer & 58.93 & 37.45 & 78.19 & 60.93 & 37.96 & 54.69 \\ 
        Informer & 62.64 & 32.53 & 77.57 & 57.83 & 38.10 & 53.73 \\ 
        Performer & 64.19 & 38.02 & 80.04 & 66.30 & 41.43 & 58.00 \\ 
        Reformer & 62.93 & 37.68 & 78.99 & 66.49 & 48.87 & 58.99 \\ 
        BigBird & 63.86 & 39.25 & 80.28 & 68.72 & 43.16 & 59.05 \\ 
        Skyformer & 64.70 & 38.69 & 82.06 & 70.73 & 40.77 & 59.39 \\ 
        LS & 66.62 & 40.30 & 81.68 & 69.98 & 47.60 & 61.24 \\ 
        cosFormer & 67.70 & 36.50 & 83.15 & 71.96 & 51.23 & 62.11 \\
        FLASH & 64.10 & 38.70 & \underline{86.10} & 70.25 & 47.40 & 61.31 \\ 
        S4 & \underline{85.92} & \textbf{50.60} & 67.30 & \underline{72.44} & \textbf{78.07} & \underline{70.87} \\ 
        \hline
    \tnn & \textbf{86.39} & \underline{47.33} & \textbf{89.40} & \textbf{73.89} & \underline{77.84} & \textbf{74.97} \\
         \hline 
    \end{tabular}
    \label{table: lra res}
\end{table*}

\noindent
\textbf{Image modeling}
\label{cv}
We report classification results on the ImageNet-1k dataset in Table~\ref{cvr}. 
As shown, under similar parameter sizes, {\tnn} achieves better results than Deit-Tiny and comparable results with Deit-Small. 
It demonstrates the capability of our method in encoding visual signals.

\subsection{Ablation study}
\label{ablation}

\noindent
\textbf{Network structure configuration}
We ablate different structure configurations on the autoregressive language modeling task in Table~\ref{dpb:struct}. 
We consider three options of configuration: the GTU+GLU, GTU only, and attention+GLU. We empirically find that the GTU+GLU one achieves better performance than other options and choose it as our structure in TNN.

\noindent
\textbf{Input of relative position encoder}
\label{dpb input}
In Table \ref{dpb:input}, we ablate different \dpb \  inputs on language modeling. 
\textit{(-(n-1),...,(n-1))} denotes that we feed $2n-1$ constants into the \dpb. \textit{(-(n-1),...,(n-1))/n} denotes normalized constants.
The \textit{sin, cos} denotes the absolute position embedding method used in~\citep{vaswani2017attention}. 
We empirically find that using the original integers as the input for the RPE leads to better performance.

\noindent
\textbf{Relative position encoder}
There are two ways to generate relative position coefficients for the Toeplitz matrix. One is to set these coefficients as learnable parameters and allow TNN to learn them from data. The other is to use our proposed RPE network to generate these coefficients. We compare these two strategies in Table \ref{dpb:module}. The TNN with our RPE network achieves an improvement of 2.47 PPL in language modeling.

\vspace{1.5em}
\begin{minipage}[h]{\textwidth}
        \begin{minipage}[t]{0.55\textwidth}
            \centering
            \small
            \makeatletter\def\@captype{table}\makeatother\caption{\textbf{Speed comparison on Long-Range Arena benchmark.} We mark it with a dash if a method  exhausts GPU memory. The higher the better for all metrics. The \textbf{1K},...,\textbf{5K} represent the input sequence length.  }
            \label{speed:res}
            \setlength{\tabcolsep}{2.3mm}{
            \begin{tabular}{l|lllll} 
            \hline                
                & \multicolumn{5}{c}{Speed(steps per sec)}  \\ \hline 
                model & \textbf{1K} & \textbf{2K} & \textbf{3K} & \textbf{4K} & \textbf{5K} \\ \hline 
                Transformer & 15.34  & 3.05 & -  & -   & -  \\ \hline
                FLASH       & 20.49 & 11.06 & 8.47 & 7.23 & 6.93 \\ 
                LS          & 15.43 & 8.68 & 6.28 & 5.24 & 4.76  \\ 
                Performer   & 28.41 & 16.23 & 12.02 & 10.04 & 9.06 \\ 
                cosFormer   & 22.94 & 12.82 & 9.19 & 7.79 & 7.14  \\ 
                Linformer   & 27.17 & 15.63 & 11.26 & 8.77 & 7.42 \\ 
                Reformer    & 20.16 & 10.87 & 7.46 & 5.69 & 4.70 \\ 
                Nystorm     & 14.12 & 9.62 & 7.46 & 6.11 & 5.26 \\ 
                State space & 25.99 & 14.88 & 8.35 & 6.66 & 5.40 \\
                FNet        & 24.61 & 14.37 & 9.18 & 8.39 & 7.44 \\
                \hline
                TNN         & 25.72 & 15.35 & 9.90 & 8.07 & 7.00 \\
            \hline
            \end{tabular} 
            \label{lra:speed}
            }
            \makeatletter\def\@captype{table}\makeatother\caption{\textbf{Performances comparison of image classification on the ImageNet-1k dataset.}}
            \vspace{3pt}
            \setlength{\tabcolsep}{1.7mm}{
            \begin{tabular}{l|ll|ll}
            \hline
                 & DeiT-Tiny & ~ & DeiT-Small & ~ \\ \hline
                Model&Acc & Param   & Acc & Param \\ \hline
                Transformer & 72.20  & 5.7M & 79.90 & 22.0M\\ 
                \tnn& 72.29  & 6.4M& 79.20 & 23.4M  \\
            \hline
            \end{tabular}
            \label{cvr}
            }
        \end{minipage}
 \hfill       
        \begin{minipage}[t]{0.43\textwidth}
            \centering
            \setlength{\tabcolsep}{0.69cm}
            \small
            \makeatletter\def\@captype{table}\makeatother\caption{\textbf{Performances comparison with different structure configurations.} GTU+GLU achieves better performance in language modeling.}\vspace{3pt}
            {
            \begin{tabular}{l|l}
            \hline
                Method & PPL(val) \\ \hline
                GTU+GLU & 23.98 \\ 
                GTU only& 25.19 \\ 
                Attention+GLU & 27.40 \\ \hline
            \end{tabular}
            \label{dpb:struct}
            }
            
        \vspace{1.5em}
            \makeatletter\def\@captype{table}\makeatother\caption{\textbf{Results comparison with different \dpb~inputs.}}
            \setlength{\tabcolsep}{0.65cm}
            {
            \begin{tabular}{l|l}
            \hline
                Method & PPL(val) \\ \hline
                (-(n-1),...,(n-1))  & 23.98 \\ 
                (-(n-1),...,(n-1))/n & 24.11 \\
                sin, cos & 24.04 \\ \hline
            \end{tabular}
            \label{dpb:input}
            }
            \makeatletter\def\@captype{table}\makeatother\caption{\textbf{Performances comparison of {\tnn} with and without \dpb.} {\dpb} brings an improvement in language modeling.}\vspace{3pt}
             \setlength{\tabcolsep}{0.72cm}
            {
            \begin{tabular}{l|l}
            \hline
                Method & PPL(val) \\ \hline
                TNN & 23.98 \\ 
                TNN w/o {\dpb} & 26.45 \\ \hline
            \end{tabular}
            \label{dpb:module}
            
            }
        \end{minipage}

\end{minipage}
\vspace{1em}

\begin{wraptable}{r}{0.5\textwidth}
    \centering
     \setlength{\tabcolsep}{0.45cm}
    \small
    \vspace{-5mm}
    \caption{\textbf{Ablation of exponential decay rates in input length extrapolation.} The model variants are trained on a fixed sequence length of 512 and tested on a series of sequence lengths ranging from 512 to 14336. We compute the average PPL for all sequence lengths.}
    \begin{tabular}{c|c|c}
    \hline
        Decay rate & \makecell[c]{PPL\\(val)} & \makecell[c]{ Avg PPL\\(extrapolation)} \\ \hline
        0.99 (ours) & 23.98 & 23.70 \\ 
        0.90 & 25.28 & ~25.22 \\ 
        0.95 & 24.56 & ~24.63 \\ 
        0.999 & 23.98 & ~24.56 \\ 
        1 (no decay) & 24.03 & ~672.72 \\ 
        learnable & 27.65 & 24.39 \\ \hline
    \end{tabular}
     \label{decayr}
     \vspace{-12mm}
\end{wraptable}

\noindent
\textbf{Exponential decay rate}
We ablate different exponential decay rates in Table~\ref{decayr} on the language modeling. We train these model variants with a fixed sequence length of 512 and test them on a series of sequence lengths from 512 to 14336 and compute the average PPL. When there is no exponential decay, the model fails to extrapolate to a longer sequence length. We also test our model with a learnable decay rate, but it does not show better performance. We empirically select 0.99 as the exponential decay rate in our method. 

\section{Conclusion}
\vspace{-3mm}
In this paper, we propose Toeplitz neural network, a new efficient architecture that relies entirely on relative positional information for sequence modeling.
The proposed model enjoys a favorable log linear space-time complexity.
Thanks to the proposed relative position encoder and exponential decay techniques, Toeplitz neural network generalizes to long sequences with a fixed budget of parameters while obtaining consistently superior performance than competing methods across multiple challenging tasks, including language modeling, image modeling, and sequence modeling on long inputs, \ie the Long-Range Arena benchmark.
Toeplitz neural network is also a generic sequence modeling approach, which renders various popular architectures, such as Transformers, CNNs, and State-space-based methods, as its special forms, offering a unified view for sequence modeling.

\bibliography{iclr2023_conference}
\bibliographystyle{iclr2023_conference}

\appendix
\newpage
\begin{center}
\textbf{\large Appendix}
\end{center}
\section{Mathematical notations}
\label{math notation}
\begin{table}[!ht]
\centering

\small
\setlength{\tabcolsep}{2.1cm}{
\begin{tabular}{c|c}
\hline\hline
\textbf{Notation} & \textbf{Meaning} \\
\hline\hline
$\mathbf{X}$ & Hidden state. \\
$\mathbf{Q},\mathbf{K},\mathbf{V}$ & Query, key, value. \\
$\mathbf{O}$ & Attention output. \\
$d$ & Feature dimension. \\
$\mathbf{m}^\top_s$ & $s$-th row of matrix $M$. \\
$\mathbf 1_d $ & All-ones vector with dimension $d$. \\
$\mathbf I_d $  & Identity matrix with dimension $d$. \\
\hline\hline
\end{tabular}}
\caption{Mathematical notations used in the paper.}
\end{table}

\section{Proof of theorem}
\label{proof:toep}
In this section, we will prove Theorem \ref{toep}. Before doing that, let's first introduce the circulant matrix and Toeplitz matrix:

\newtheorem{circu}{Definition}[section]
\label{circu}
\begin{circu}
A matrix $\mathbf C\in \mathbb R^{n\times n}$ is a circulant matrix if and only if $\mathbf C_{ij}= c_{(i-j + n )\bmod n}$ , \ie
\begin{equation}
\mathbf C=\left[\begin{array}{cccccc}
c_0 & c_{n-1} &c_{n-2} & \cdots & \cdots & c_{1} \\
c_1 & c_0 & c_{n-1} & \ddots & & \vdots \\
c_2 & c_1 & \ddots & \ddots & \ddots & \vdots \\
\vdots & \ddots & \ddots & \ddots & c_{n-1} & c_{n-2} \\
\vdots & & \ddots & c_1 & c_0 & c_{n-1} \\
c_{n-1} & \ldots & \ldots & c_2 & c_1 & c_0
\end{array}\right] \in \mathbb R^{n\times n}.
\end{equation}
\end{circu}

\newtheorem{toeplitz-def}[circu]{Definition}
\begin{toeplitz-def}
\label{toeplitz-def}
A matrix $\mathbf T\in \mathbb R^{n\times n}$ is a Toeplitz matrix if and only if $\mathbf T_{ij}= t_{i-j}$ , \ie
\begin{equation}
\mathbf T=\left[\begin{array}{cccccc}
t_0 & t_{-1} &t_{-2} & \cdots & \cdots & t_{-n+1} \\
t_1 & t_0 & t_{-1} & \ddots & & \vdots \\
t_2 & t_1 & \ddots & \ddots & \ddots & \vdots \\
\vdots & \ddots & \ddots & \ddots & t_{-1} & t_{n-2} \\
\vdots & & \ddots & t_1 & t_0 & t_{-1} \\
t_{n-1} & \ldots & \ldots & t_2 & t_1 & t_0
\end{array}\right] \in \mathbb R^{n\times n}.
\end{equation}
\end{toeplitz-def}

Based on the definition, we can give a key lemma:
\newtheorem{circu-lemma}[circu]{Lemma}
\label{circu-lemma}
\begin{circu-lemma}
A circulant matrix $\mathbf C\in \mathbb R^{n\times n}$ is orthogonally equivalent to the diagonal matrix $\mathbf \Lambda$, in particular, the orthogonal matrix $\mathbf F$ is a $n\times n$ DFT matrix:
\begin{equation}
\begin{gathered}
\mathbf C = \mathbf F^{\top} \Lambda \mathbf F, \\
\Lambda = \mathrm{diag}\{\mathbf F[a_0,a_1,\ldots, a_{n-1}]^\top\} \in \mathbb R^{n\times n}, 
{\mathbf F}_{st}= \exp\left(\frac{2\pi st i}{n}\right),i^2=-1.
\end{gathered}
\end{equation}
\end{circu-lemma}
The proof can be found in \citep{gray2006toeplitz}. Based on this, we can prove a key lemma:
\newtheorem{circu-product}[circu]{Lemma}
\label{circu-product}
\begin{circu-product}
For a vector $\mathbf x \in \mathbb R^{n}$ and a circulant matrix $\mathbf C\in \mathbb R^{n\times n}$, matrix multiplication $\mathbf C \mathbf x$ can be done in $O(n\log n)$ time.
\end{circu-product}

\begin{proof}[Proof of Lemma~\ref{circu-product}]
\label{prooftoep}
Because $\mathbf F, \mathbf F^{\top}$ is a DFT matrix, so $\mathbf F \mathbf x$ and $\mathbf F^{\top} \mathbf x$ can be done $O(n\log n) $ time \citep{bracewell1986fourier}. Since $\mathbf \Lambda $ is a diagonal matrix, so $\mathbf \Lambda \mathbf x$ can be done in $O(n)$ time, note that its diagonal elements $\mathbf F[a_0,a_1,\ldots , a_{n-1}]^\top$ can also be computed in $O(n\log n)$ time complexity, therefore,
\begin{equation}
\mathbf C \mathbf x= \mathbf F^{\top} {\mathbf \Lambda} \mathbf F \mathbf x
=\mathbf F^{\top} \left( \mathbf{\Lambda} (\mathbf F \mathbf x)\right),
\end{equation}
can be done in $O(n\log n )$.
\end{proof}

Based on this, we can prove Theorem \ref{toep}:

\begin{proof}[Proof of Theorem~\ref{toep}]

We first fill the Toeplitz matrix $\mathbf T\in \mathbb R^{n\times }$ into a circulant matrix $\mathbf C \in \mathbb R^{2n\times 2n}$:
\begin{equation}
c_{k} =\begin{cases}
t_k , 0 \le k \le n - 1\\
t_0 , k=n\\
t_{k -2n},   n+1\le k \le 2n-1
\end{cases} ,
\end{equation}
\ie
\begin{equation}
\mathbf C=\left[\begin{array}{ccccc|ccccc}
t_0 & t_{-1} & \ldots & \ldots & t_{-n+1} & t_0 & t_{n-1} & \ldots & t_2 & t_1 \\
t_1 & t_0 & \ddots & & \vdots & t_{-n+1} & \ddots & \ddots & & t_2 \\
t_2 & \ddots & \ddots & \ddots & \vdots & \vdots & \ddots & & \ddots & \vdots \\
\vdots & & \ddots & t_0 & t_{-1} & t_{-2} & & \ddots & \ddots & t_{n-1} \\
t_{n-1} & \ldots & \ldots & t_1 & t_0 & t_{-1} & t_{-2} & \ldots & t_{-n+1} & t_0 \\
\hline t_0 & t_{n-1} & \ldots & \ldots & t_1 & t_0 & t_{-1} & \ldots & \ldots & t_{-n+1} \\
t_{-n+1} & \ddots & \ddots & & t_2 & t_1 & t_0 & \ddots & & \vdots \\
\vdots & \ddots & & \ddots & \vdots & t_2 & \ddots & \ddots & \ddots & \vdots \\
t_{-2} & & \ddots & \ddots & t_{n-1} & \vdots & & \ddots & t_0 & t_{-1} \\
t_{-1} & t_{-2} & \ldots & \ldots & t_0 & t_{n-1} & \ldots & \ldots & t_1 & t_0
\end{array}\right] \in \mathbb R^{2n\times 2n}.
\end{equation}

Using the notation of block matrix, we can define:
\begin{equation}
\begin{gathered}
 \mathbf C = \left[\begin{array}{cc}
\mathbf C_1 & \mathbf C_2\\
\mathbf C_3 & \mathbf C_4\\
\end{array}\right] \in \mathbb R^{2n\times 2n},\mathbf C_s \in \mathbb R^{n \times n}, s=1,2,3,4,
\mathbf C_1 = \mathbf T 
\end{gathered}.
\end{equation}
For the vector $\mathbf x\in \mathbb R^{n}$, let's define:
\begin{equation}
\mathbf x_1 = \left[\begin{array}{c}
\mathbf x\\
\mathbf 0_n
\end{array}\right] \in \mathbb R^{2n},
\end{equation}
so:
\begin{equation}
\mathbf C \mathbf x_1 =\left[\begin{array}{cc}
\mathbf C_1 & \mathbf C_2\\
\mathbf C_3 & \mathbf C_4\\
\end{array}\right]\left[\begin{array}{c}
\mathbf x\\
\mathbf 0_n
\end{array}\right]=\left[\begin{array}{c}
\mathbf C_1 \mathbf x\\
\mathbf C_3 \mathbf x
\end{array}\right]=\left[\begin{array}{c}
\mathbf T \mathbf x\\
\mathbf C_3 \mathbf x
\end{array}\right] \in \mathbb R^{2n},
\end{equation}
therefore:
\begin{equation}
\left[\begin{matrix}
{\mathbf I}_n &
{\mathbf 0}_{n\times n}
\end{matrix}\right]\mathbf C \mathbf x_1  =
\left[\begin{matrix}
\mathbf I_n &
\mathbf 0_{n\times n}
\end{matrix}\right]\left[\begin{array}{c}
\mathbf T \mathbf x\\
\mathbf C_3 \mathbf x
\end{array}\right]=\mathbf T \mathbf x.
\end{equation}

Note that:
\begin{itemize}
     \item Computing {$\mathbf C \mathbf x_1$ has a time complexity of $O(2n\log(2n))=O(n\log n)$}.
     \item {$\left[\begin{array}{cc}
   \mathbf I_n &
   \mathbf 0_{n\times n}
   \end{array}\right]\mathbf C \mathbf x_1 $ is equivalent to selecting the first $n$ rows of $\mathbf C \mathbf x_1$, the time complexity is $O(n)$}.
\end{itemize}

So the total time complexity is $O(n\log n)$.
\end{proof}

\section{Matrix form of sequential models}
In this section, we give the matrix form of some sequence models mentioned in section \ref{sec:relation}.

\subsection{CNN}
\label{matrix:cnn}
The matrix form of CNN mentioned in Eq. \ref{eq:cnn} is:
\begin{equation}
\left[\begin{array}{c}
\mathbf y_{0} \\
\mathbf y_{1} \\
\mathbf y_{2} \\
\vdots \\
\mathbf y_{n+m-1}
\end{array}\right]=\left[\begin{array}{ccccc}
\mathbf h_{0} & 0 & \ldots & 0 & 0 \\
\mathbf h_{1} &\mathbf  h_{0} & \ldots & \vdots & \vdots \\
\mathbf h_{2} & \mathbf h_{1} & \ldots & 0 & 0 \\
\vdots &\mathbf  h_{2} & \ldots & \mathbf h_{0} & 0 \\
\mathbf h_{m-2} & \vdots & \ldots & \mathbf h_{1} & \mathbf h_{0} \\
\mathbf h_{m-1} &\mathbf  h_{m-2} & \vdots & \vdots & \mathbf h_{1} \\
0 & \mathbf h_{m-1} & \ldots & \mathbf h_{m-3} & \vdots \\
0 & 0 & \ldots & \mathbf h_{m-2} & \mathbf h_{m-3} \\
\vdots & \vdots & \vdots & \mathbf h_{m-1} & \mathbf h_{m-2} \\
0 & 0 & 0 & \cdots & \mathbf h_{m-1}
\end{array}\right]\left[\begin{array}{c}
\mathbf x_{0} \\
\mathbf x_{1} \\
\mathbf x_{2} \\
\vdots \\
\mathbf x_{n-1}
\end{array}\right]\in \mathbb R^{n+m-1}.
\end{equation}

\subsection{State Space}
\label{matrix:ss}
The Toeplitz matrix mentioned in Eq. \ref{eq:ss} is:
\begin{equation}
\begin{gathered}
\mathbf T=\left[\begin{matrix}
\mathbf k_{0} & 0 & 0 & \cdots & \cdots & 0 \\
\mathbf k_{1} & \mathbf k_{0} & 0 & \ddots & & \vdots \\
\mathbf k_{2} & \mathbf k_{1} & \ddots & \ddots & \ddots & \vdots \\
\vdots & \ddots & \ddots & \ddots & 0 & 0 \\
\vdots & & \ddots & \mathbf k_{1} & \mathbf k_{0} & 0 \\
\mathbf k_{s-1} & \cdots & \cdots & \mathbf k_{2} & \mathbf k_{1} & \mathbf k_{0}
\end{matrix}\right] \in \mathbb R^{n\times n}
\end{gathered}.
\end{equation}

\section{Configurations}
\label{config}

\begin{table*}[!ht]
\small
\center
\setlength{\tabcolsep}{0.6cm}
{
\caption{Detailed training configurations used in our experiments. ``Total batch size'' means $\mathrm{batch\_per\_gpu} \times \mathrm{update\_freq} \times \mathrm{num\_gpus}$. ``ALM'' stands for Autoregressive Language Model. ``BLM'' stands for Bidirectional Language Model. ``IM'' stands for Image Modeling.}
\label{configuration}
\begin{tabular}{l|l|l|l}
\hline\hline
 & AML & BLM & IM \\
\hline\hline
Data                                                         & WikiText-103                  & WikiText-103   &ImageNet-1k                      \\
Tokenizer method & BPE    & BPE  & - \\
Src Vocab size & 50265  & 50265  & -\\
Sequence length                                               & 512                           & 512                                      & -         \\
Total batch size & 128                           & 512  & 2048                                            \\
Number of updates/epochs                                            & 50k updates                         & 50k updates                                   & 300 epochs   \\
Warmup steps/epochs                                                 & 4k steps                            & 3k steps                                        & 5 epochs      \\
Peak learning rate                                           & 5e-4                      & 5e-4    &2.5e-4                          \\
Learning rate scheduler                                      & Inverse sqrt                  & Polynomial decay    &cosine                   \\
Optimizer                                                    & Adam                          & Adam     &Adamw                                     \\
Adam $\epsilon$                                                     & 1e-8            

          & 1e-6      & 1e-8                            \\
Adam $(\beta_1,\beta_2)$                                                   & (0.9, 0.98)                   & (0.9, 0.98)                & (0.9, 0.98)            \\
Weight decay                                                 & \makecell[c]{0.2 for TNN,\\0.1 for others}                        &\makecell[c]{0.2 for TNN,\\0.1 for others}   &0.1                                    \\
Gradient clipping                                            &  -  & -  & 1.0                                             \\
\hline\hline
\end{tabular}}

\end{table*}

\begin{table}[!ht]
    \centering
    \setlength{\tabcolsep}{0.75cm}
    \caption{Detailed model configurations used in our experiments.}
    \begin{tabular}{l|llll}
    \hline \hline
        Model & LM & Roberta & Deit-tiny & Deit-small \\ \hline\hline
        {\tnn}   & ~ &~ &~ \\ \hline
        Layer & 6 & 12 & 12 & 12 \\ 
        Feature dim & 512 & 768 & 192 & 384 \\ \hline 
        GTU   & ~ &~ &~ \\ \hline
        GTU dim & 1536 & 2304 & 576 & 1152 \\ 
        GTU act & SiLU & SiLU & SiLU & SiLU \\ \hline
        GLU   & ~ &~ &~ \\ \hline
        GLU dim & 512 & 768 & 192 & 384 \\ 
        GLU act & SiLU & SiLU & SiLU & SiLU \\ \hline
        RPE   & ~ &~ &~ \\ \hline
        RPE layer & 6 & 6 & 1 & 1 \\ 
        RPE dim & 64 & 64 & 48 & 48 \\ 
        RPE act & ReLU & ReLU & ReLU & ReLU \\ 
         \makecell[c]{Exponential\\decay bias} & 0.99 & 0.99 & 0.95 & 0.9 \\ \hline \hline
    \end{tabular}
\end{table}

\section{Experiments}
\begin{table}[!ht]
    \centering
    \small
    \caption{\textbf{Performances Comparison on the Long Range Arena benchmark.} We use \textbf{bold} and \underline{underline} to highlight the best and the second result of each task respectively. The proposed {\tnn} achieves the best performances and outperforms all competing methods.}
    \begin{tabular}{l|lllllll}
    \hline
        Model & Text & ListOps & Retrieval & Pathfinder & Path-X & Image & AVG. \\ \hline
        S4-Large & 	\underline{86.82} &\underline{59.60} &\underline{90.90}& \textbf{94.20} & \textbf{96.35}  & \textbf{88.65} & \underline{86.09
} \\ 
        \hline
    \tnn-Large & \textbf{87.90} & \textbf{61.04} & \textbf{90.97} & \underline{93.00} & \underline{96.10} & \underline{88.24}	 & \textbf{86.21} \\
         \hline 
    \end{tabular}
    \label{table: lra large res}
\end{table}

\section{Extrapolation}
\label{extrap}
\begin{sidewaystable}[!ht]
    \centering
    \caption{The extrapolation performance of each method. The best result is highlighted in \textbf{bold} and the second in \underline{underline}. $\downarrow$ means \textit{lower is better}.}
    \begin{tabular}{|l|l|l|l|l|l|l|l|l|l|l|l|l|l|l|}
    \hline
       \makecell[c]{\\Seqlen} & \makecell[c]{Transformer\\ PPL$\downarrow$} & \makecell[c]{LS\\ PPL$\downarrow$} & \makecell[c]{FLASH\\ PPL$\downarrow$} & \makecell[c]{1+elu\\ PPL$\downarrow$} & \makecell[c]{Performer\\ PPL$\downarrow$} & \makecell[c]{cosFormer\\ PPL$\downarrow$} & \makecell[c]{Syn(D)\\ PPL$\downarrow$} & \makecell[c]{Syn(R)\\ PPL$\downarrow$} & \makecell[c]{gMLP\\ PPL$\downarrow$} & \makecell[c]{S4\\ PPL$\downarrow$} & \makecell[c]{DSS \\ PPL$\downarrow$}& \makecell[c]{GSS\\ PPL$\downarrow$} & \makecell[c]{ALiBi\\ PPL$\downarrow$} & \makecell[c]{TNN\\ PPL$\downarrow$} \\ \hline
        512 & 24.78 & 24.05 & 24.69 & 28.05 & 63.16 & 27.06 & 32.43 & 34.78 & 29.13 & 30.74 & 41.07 & 39.66 & 24.15 & 24.67 \\ 
        768 & 41.36 & 23.49 & 16950.45 & 47.35 & 159.74 & 32.90 & 101.6 & 107.36 & 1.34E+9 & 30.41 & 40.50 & 39.76 & 23.38 & 24.25 \\
        1024 & 62.35 & 23.21 & 174165.47 & 70.47 & 504.30 & 55.28 & 169.48 & 184.57 & 8.93E+12 & 30.24 & 40.22 & 39.91 & 22.98 & 24.05 \\
        1280 & 82.52 & 23.07 & 346502.88 & 91.88 & 1020.28 & 102.88 & 224.44 & 250.57 & 1.58E+15 & 30.15 & 40.03 & 40.82 & 22.74 & 23.91 \\ 
        1536 & 100.17 & 22.97 & 647788.12 & 111.56 & 1568.83 & 175.26 & 265.44 & 302.48 & 4.96E+16 & 30.08 & 39.94 & 41.04 & 22.57 & 23.83 \\ 
        1792 & 118.42 & 22.97 & 1719873.5 & 129.92 & 2138.50 & 267.65 & 298.55 & 345.80 & 5.67E+17 & 30.04 & 39.85 & 41.08 & 22.52 & 23.79 \\
        2048 & 133.44 & 22.99 & 6.25E+6 & 147.09 & 2693.89 & 368.02 & 322.86 & 390.13 & 3.59E+18 & 30.00 & 39.79 & 41.53 & 22.43 & 23.73 \\
        3072 & 188.95 & 23.25 & 4.17E+10 & 206.88 & 4945.82 & 820.77 & 399.63 & 515.35 & 2.19E+20 & 29.91 & 39.64 & 44.08 & 22.24 & 23.63 \\
        4096 & 246.06 & 23.83 & 2.67E+13 & 267.87 & 7170.91 & 1335.51 & 454.85 & 589.30 & 1.61E+21 & 29.88 & 39.59 & 48.27 & 22.17 & 23.58 \\
        5120 & 270.93 & 24.56 & 1.26E+15 & 299.31 & 8443.15 & 1735.50 & 495.7 & 661.49 & 5.08E+21 & 29.85 & 39.54 & 53.32 & 22.11 & 23.54 \\
        6144 & 311.65 & 25.45 & 1.58E+16 & 352.62 & 10234.07 & 2146.19 & 527.2 & 716.61 & 1.16E+22 & 29.83 & 39.51 & 57.73 & 22.08 & 23.53 \\
        7168 & 346.58 & 26.42 & 8.11E+16 & 389.02 & 11420.56 & 2494.79 & 551.69 & 739.98 & 1.98E+22 & 29.82 & 39.49 & 60.25 & 22.07 & 23.51 \\
        8192 & 372.18 & 27.11 & 3.40E+17 & 411.50 & 12557.09 & 2902.24 & 565.78 & 775.63 & 2.78E+22 & 29.82 & 39.49 & 63.36 & 22.05 & 23.51 \\
        9216 & 387.29 & 28.78 & 1.22E+18 & 453.27 & 14847.66 & 3028.72 & 576.15 & 799.67 & 3.93E+22 & 29.80 & 39.46 & 74.92 & 22.03 & 23.49 \\
        10240 & 395.94 & 30.13 & 4.03E+18 & 457.06 & 13623.83 & 3247.83 & 588.74 & 802.38 & 4.93E+22 & 29.79 & 39.45 & 81.87 & 22.02 & 23.48 \\
        11264 & 426.54 & 31.14 & 1.07E+19 & 504.19 & 14661.77 & 3341.91 & 598.33 & 810.71 & 5.70E+22 & 29.79 & 39.46 & 87.67 & 22.00 & 23.48 \\
        12288 & 463.50 & 33.21 & 2.52E+19 & 555.38 & 17959.85 & 3644.81 & 610.25 & 837.11 & 7.18E+22 & 29.79 & 39.44 & 92.11 & 22.00 & 23.48 \\
        13312 & 506.35 & 34.72 & 4.96E+19 & 584.01 & 20026.35 & 3851.70 & 618.42 & 844.62 & 8.04E+22 & 29.78 & 39.43 & 96.00 & 22.00 & 23.47 \\
        14336 & 486.86 & 36.05 & 1.28E+20 & 589.83 & 20971.31 & 3951.26 & 627.03 & 861.72 & 9.41E+22 & 29.78 & 39.43 & 101.47 & 21.99 & 23.46 \\  \hline
        Avg & 261.36 & 26.71 & 1.16E+19 & 299.86 & 8684.79 & 1764.75 & 422.56 & 556.33 & 2.41E+22 & 29.97 & 39.75 & 60.26 & \textbf{22.40} & \underline{23.70} \\ \hline
    
    \end{tabular}
    \label{extrapola}
\end{sidewaystable}

\section{Visualization}
\label{visualization}
In this section, we visualize Tnn, in particular, we choose the Toeplitz matrix used in Roberta for visualization.
\begin{figure}
     \centering
     \caption{Visualization of the Toeplitz matrix used by each layer in Roberta, each element of the matrix represents the interaction between tokens. The Toeplitz matrices show similar behaviors to conventional transformer attention matrices where the diagonal concentrates the most attention. }
     \begin{subfigure}[b]{0.3\textwidth}
         \centering
         \caption{Layer 1.}
         \includegraphics[width=\textwidth]{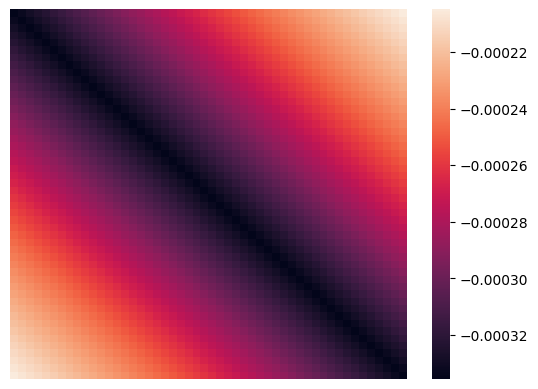}
         
     \end{subfigure}
     \hfill
     \begin{subfigure}[b]{0.3\textwidth}
         \centering
         \caption{Layer 2.}
         \includegraphics[width=\textwidth]{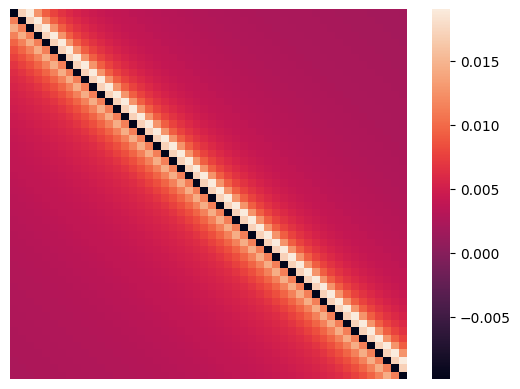}
     \end{subfigure} 
      \hfill
     \begin{subfigure}[b]{0.3\textwidth}
         \centering
         \caption{Layer 3.}
         \includegraphics[width=\textwidth]{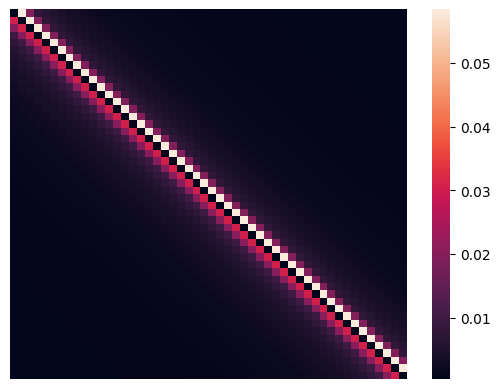}
     \end{subfigure}
     \newline
      
     \begin{subfigure}[b]{0.3\textwidth}
         \centering
         \caption{Layer 4.}
         \includegraphics[width=\textwidth]{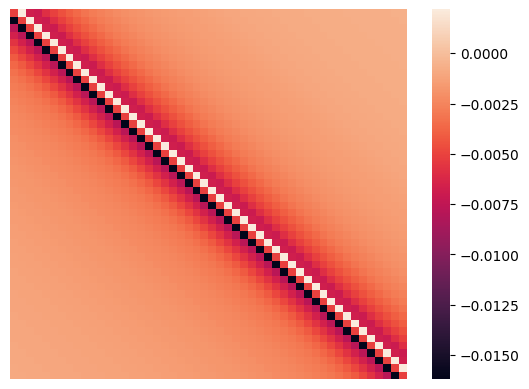}
     \end{subfigure}
      \hfill
        \begin{subfigure}[b]{0.3\textwidth}
         \centering
         \caption{Layer 5.}
         \includegraphics[width=\textwidth]{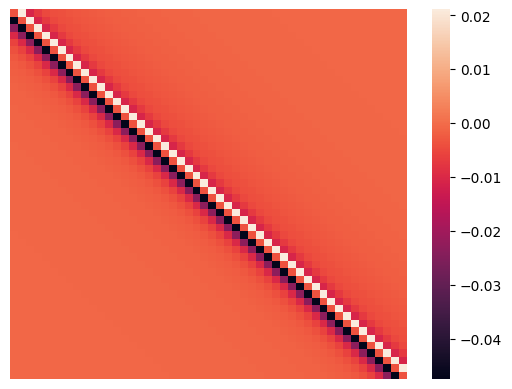}
     \end{subfigure}
     \hfill
     \begin{subfigure}[b]{0.3\textwidth}
         \centering
         \caption{Layer 6.}
         \includegraphics[width=\textwidth]{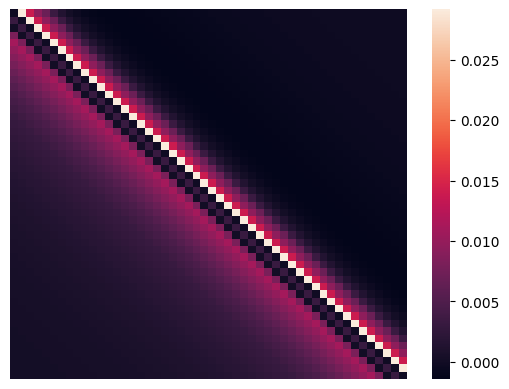}
     \end{subfigure} 
     \newline
     \begin{subfigure}[b]{0.3\textwidth}
         \centering
         \caption{Layer 7.}
         \includegraphics[width=\textwidth]{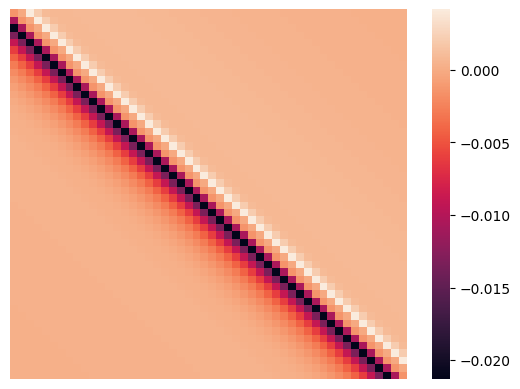}
     \end{subfigure}
     \hfill
     \begin{subfigure}[b]{0.3\textwidth}
         \centering
         \caption{Layer 8.}
         \includegraphics[width=\textwidth]{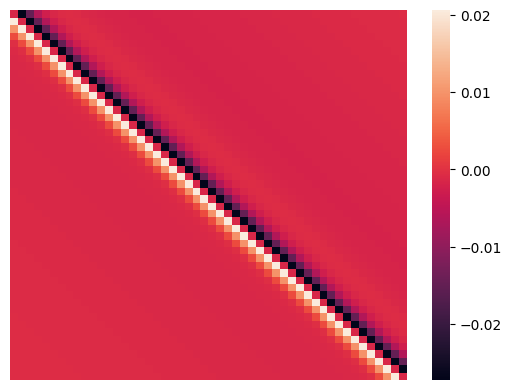}
     \end{subfigure}
     \hfill
     \begin{subfigure}[b]{0.3\textwidth}
         \centering
         \caption{Layer 9.}
         \includegraphics[width=\textwidth]{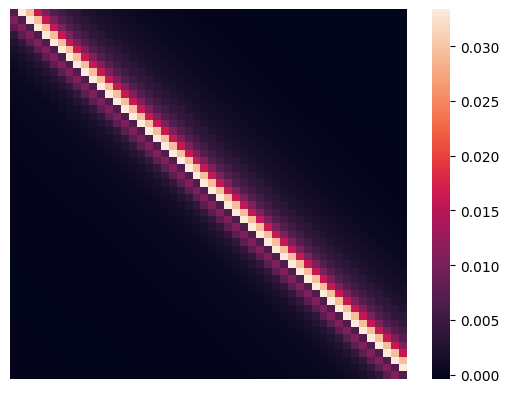}
     \end{subfigure}
     \newline
     \begin{subfigure}[b]{0.3\textwidth}
         \centering
         \caption{Layer 10.}
         \includegraphics[width=\textwidth]{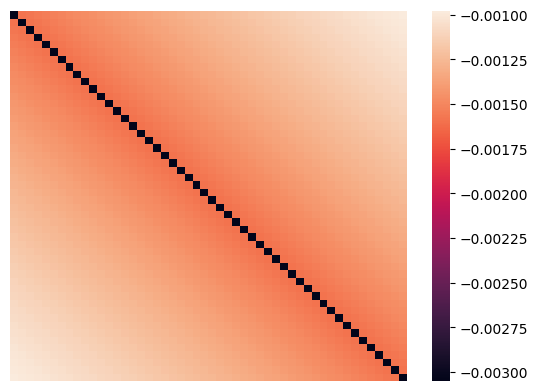}
     \end{subfigure} 
     \hfill
     \begin{subfigure}[b]{0.3\textwidth}
         \centering
         \caption{Layer 11.}
         \includegraphics[width=\textwidth]{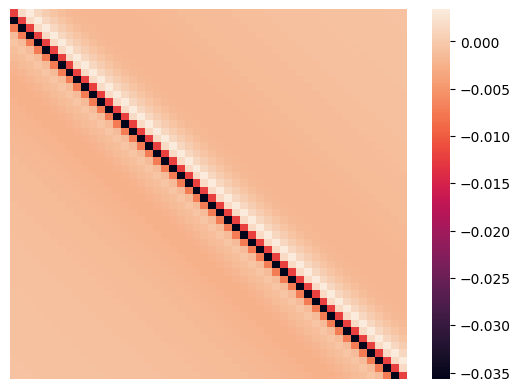}
     \end{subfigure}
     \hfill
     \begin{subfigure}[b]{0.3\textwidth}
         \centering
         \caption{Layer 12.}
         \includegraphics[width=\textwidth]{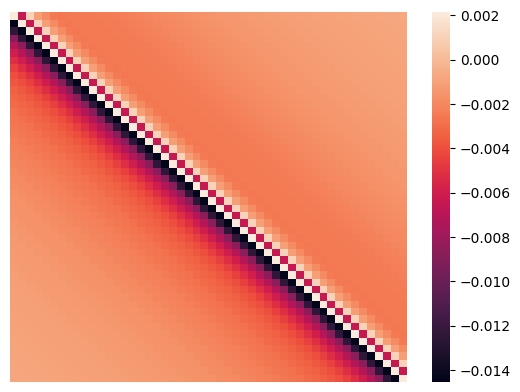}
     \end{subfigure}
\end{figure}

\end{document}